\documentclass[twoside]{article}

\usepackage{microtype}
\usepackage{graphicx}
\usepackage{subcaption}
\usepackage{booktabs} 
\usepackage{amsfonts}
\usepackage{algorithm}
\usepackage{url}
\usepackage{algpseudocode}
\usepackage[table]{xcolor}
\usepackage{multirow}
\usepackage{csquotes}
\usepackage{bm}
\usepackage{enumitem}
\usepackage{tikz}
\usepackage{dblfloatfix}

\usepackage[round]{natbib}

\usepackage{hyperref}

\usepackage[accepted]{aistats2026}

\usepackage{amsmath}
\usepackage{amssymb}
\usepackage{mathtools}
\usepackage{amsthm}

\usepackage[capitalize,noabbrev]{cleveref}
\theoremstyle{plain}

\newtheorem{proposition}{Proposition}[section]

\theoremstyle{definition}

\theoremstyle{remark}

\newcommand{\params}{\boldsymbol{\theta}}
\newcommand{\data}{\mathcal{D}}

\usepackage[textsize=tiny]{todonotes}

\begin{document}

\twocolumn[

\aistatstitle{Towards E-Value Based Stopping Rules\\ for Bayesian Deep Ensembles}

\aistatsauthor{Emanuel Sommer\\LMU Munich, MCML \And Rickmer Schulte\\LMU Munich, MCML \And Sarah Deubner\\LMU Munich, MCML \AND Julius Kobialka\\LMU Munich, MCML  \And David R\"ugamer\\LMU Munich, MCML}
\aistatsaddress{}]

\begin{abstract}
  Bayesian Deep Ensembles (BDEs) represent a powerful approach for uncertainty quantification in deep learning, combining the robustness of Deep Ensembles (DEs) with flexible multi-chain MCMC. While DEs are affordable in most deep learning settings, (long) sampling of Bayesian neural networks can be prohibitively costly. Yet, adding sampling after optimizing the DEs has been shown to yield significant improvements. This leaves a critical practical question: \textit{How long should the sequential sampling process continue to yield significant improvements over the initial optimized DE baseline?}
  To tackle this question, we propose a stopping rule based on E-values. We formulate the ensemble construction as a sequential anytime-valid hypothesis test, providing a principled way to decide whether or not to reject the null hypothesis that MCMC offers no improvement over a strong baseline, to early stop the sampling. Empirically, we study this approach for diverse settings.
  Our results demonstrate the efficacy of our approach and reveal that only a fraction of the full-chain budget is often required.
\end{abstract}

\section{INTRODUCTION} 

Bayesian deep learning constitutes a principled framework for quantifying uncertainty in modern neural networks \citep{pmlr-v235-papamarkou24b}. Among various approximations, Bayesian Deep Ensembles (BDEs) have emerged as a particularly promising method, effectively bridging the gap between the practical performance of Deep Ensembles \citep[DE;][]{lakshminarayanan_2017_SimpleScalablea} and the rigor and flexibility of MCMC \citep{sommer2024connecting, sommer2025mile, sommer2025can, duffield2025scalable, paulin_sampling_2025}. The general recipe for BDE ensembles involves multiple short MCMC chains initialized in high-likelihood regions identified by individual DE members. Since the optimized solution already incurs non-negligible computational cost, this raises a natural question: how much additional sampling is truly necessary to improve ensemble performance? Put differently: \textit{what is the minimal BDE required to deliver significant gains over a strong baseline such as a Deep Ensemble?} 

Existing practice typically relies on running chains for a predetermined number of iterations or applying ad hoc early-stopping heuristics, for example, by monitoring validation metric plateaus \citep{sommer2024connecting}. Although often effective in practice, such strategies lack a principled statistical foundation, provide no formal guarantees under adaptive stopping, and may lead to unnecessarily long sampling runs.
We address this gap by introducing an E-value-based stopping rule. E-values quantify evidence against a null hypothesis in a way that remains valid under optional stopping, making them well suited for sequential and anytime decision procedures \citep{Ramdas2024HypothesisTW}.

Our contributions are as follows:
\begin{itemize}
    \item We formalize the BDE sampling process as a sequential hypothesis test using E-values, providing theoretical conditions under which our stopping rule constitutes a valid sequential test based on a test supermartingale.
    \item We demonstrate on diverse tasks that our stopping criterion identifies minimal BDEs that achieve competitive performance while reducing a naive sample budget by a factor of $10-60$. 
    \item The resulting approach 1) provides practical guidance for distillation and compression approaches and 2) autonomously abstains from collecting samples and prevents wasteful computation when the MCMC chain fails to provide statistical evidence of improvement.
\end{itemize}

It is important to note that we do not claim to find the \textit{optimal} ensemble size (which may be infinitely large); rather, we identify the point at which we can reject the hypothesis that the extra computation invested in MCMC sampling did not yield significant improvements.

\section{BACKGROUND} 

We consider a supervised learning setting with training data $\data = \{(\bm{x}_i,y_i)\}_{i=1}^{n} \in (\mathcal{X} \times \mathcal{Y})^n$ and a holdout validation dataset $\data_{\text val} = \{(\bm{x}_i, y_i)\}_{i=1}^m$. Using a Bayesian neural network, we aim to approximate the posterior predictive density (PPD)
\begin{equation*}\label{eq:PPD}
    p({y} | \bm{x}, \data) = \int p({y}|\bm{x}, \params) p(\params|\data) d\params \approx \frac{1}{K} \sum_{j=1}^{K} p({y} | \bm{x}, \params_j)
\end{equation*}
using $K$ posterior samples $\{\params_j\}_{j=1}^K$, where $p(y|\bm{x}, \params_j)$ is the likelihood for data point $(y,\bm{x})\in \mathcal{X}\times\mathcal{Y}$ 
, parameterized by the $j$th posterior sample $\params_j$.
In the BDE context, $\params_1$ represents a strong reference model (e.g., an optimized solution via Adam optimization or the first sample after the samplers burn-in). Subsequent samples $\{\params_2, \dots, \params_K\}$ are generated sequentially using a sampling algorithm specifically designed for Bayesian neural networks (e.g., pSMILE, \citealp{sommer2025can}). While previous research addresses optimal BDE chain initialization under a limited computational budget \citep{rundel2025efficiently}, we focus specifically on the sampling phase. Throughout this work, we employ isotropic Gaussian priors on $\params$ \citep{kobialka2026interplay}.

\paragraph{E-values}
An E-value is a non-negative random variable $E$ such that $\mathbb{E}_{P_0}[E] \leq 1$ under a null hypothesis $P_0$. Unlike p-values, E-values allow for the accumulation of evidence over time without multiple testing corrections. A sequence of E-values $(E_t)_{t \in \mathbb{N}}$ forms an E-process if, for any stopping time $\tau \in \mathbb{N}$, $\mathbb{E}_{P_0}[E_\tau] \leq 1$ \citep{Ramdas2024HypothesisTW}. This property is crucial for our setting, as it permits the practitioner to stop collecting samples as soon as sufficient evidence is obtained.

\section{E-VALUE AS STOPPING CRITERION FOR BDE}

To obtain a principled stopping rule, we cast the problem as testing whether the additional sampling in hybrid approaches such as BDEs yields a statistically significant improvement in predictive performance over the optimized baseline. We consider the case where the validation set $\data_{\text{val}}$ is fixed, and the source of randomness is the sequence of parameters generated by the sampling algorithm.

\subsection{Hypothesis Definition}
Let $L(\params) = \prod_{(\bm{x},y) \in \data_{\text val}} p(y|\bm{x}, \params)$ denote the likelihood of the validation set for a single parameter vector $\params$. Let $\params_1$ be the optimized warm-start baseline. 

Since $\params_1$ usually constitutes a well-performing single-mode approximation, we can cast the question of whether sampling improves upon $\params_1$ as the following null hypothesis $H_0$ with underlying distribution $Q$:
\begin{equation}
    H_0: \mathbb{E}_{\params \, \overset{\text{i.i.d.}}{\sim} Q}[L(\params)] \le L(\params_1).
\end{equation}
Under $H_0$, the sampling algorithm generates models that, on average, have a likelihood on the validation data no better than the baseline model. 

From a purely Bayesian perspective, one might object to this formulation, noting that in high-dimensional parameter spaces the typical set, where most posterior mass concentrates, can exhibit lower pointwise likelihood than the mode \citep{betancourt_2018_ConceptualIntroductiona}. Consequently, a sampler might naturally drift into lower likelihood regions. 
However, in the context of BDEs, our objective is to efficiently collect high-utility, diverse models. We posit that, under a finite ensemble budget, transitioning to regions with substantially lower validation likelihood---even if theoretically justified by posterior volume---is computationally inefficient from the perspective of predictive performance. By aiming for samples to maintain parity with or improve upon the baseline mode $\params_1$, we bias the selection towards high-density regions and high-performing modes. This view is further supported by the discussion provided in App.~\ref{app:connectionjensen}.

\subsection{E-value Validity and Construction} \label{sec:methods}

For each sample $\params_k$ ($k > 1$), we define the individual E-value $S_k$ as the likelihood ratio on the validation set:
\begin{equation}
    S_k = \frac{L(\params_k)}{L(\params_1)} = \prod_{i=1}^m \frac{p(y_i|\bm{x}_i, \params_k)}{p(y_i|\bm{x}_i, \params_1)}.
\end{equation}

\begin{proposition}
    Under $H_0$ and canonical filtration $\mathcal{F}_k = \sigma(\theta_1, \dots, \theta_k)$, the statistic $S_k$ is a valid e-variable, satisfying $\mathbb{E}_{H_0}[S_k] \le 1$.
\end{proposition}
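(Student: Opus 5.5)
The plan is to condition on the baseline $\params_1$ and on the past, and then use $H_0$ directly. The validation set $\data_{\text{val}}$ is fixed, and $L(\params_1)$ is strictly positive as long as the baseline assigns positive density to every validation point. Hence $S_k$ is a measurable, non-negative function of $(\params_1,\params_k)$, being a ratio of non-negative likelihoods. Non-negativity, the first requirement for an e-variable, is therefore immediate. It remains to bound the expectation.

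For the expectation, I will read $H_0$ as a statement about the conditional law of the new draw. Given $\mathcal{F}_{k-1}$, and in particular given $\params_1$, the sample $\params_k$ is distributed according to $Q$; under the i.i.d.\ formulation this means $\params_k$ is independent of $\mathcal{F}_{k-1}$ apart from its dependence on the fixed warm start. Since $L(\params_1)$ is $\mathcal{F}_{k-1}$-measurable, it can be pulled out of the conditional expectation:
\begin{equation*}
\mathbb{E}_{H_0}\!\left[S_k \mid \mathcal{F}_{k-1}\right] = \frac{\mathbb{E}_{\params_k\sim Q}\!\left[L(\params_k)\mid \mathcal{F}_{k-1}\right]}{L(\params_1)} \le \frac{L(\params_1)}{L(\params_1)} = 1,
\end{equation*}
where the inequality is exactly $H_0$. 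Taking expectations and using the tower property then gives $\mathbb{E}_{H_0}[S_k]\le 1$. The conditional form is the useful one, because it also yields the supermartingale property of the running product $\prod_{j\le k} S_j$ that the paper needs later. Integrability needs no separate argument: non-negativity lets conditional expectations be defined in $[0,\infty]$, and the bound then shows they are finite.

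The main obstacle is justifying the conditional step, namely that $H_0$ controls $\mathbb{E}[L(\params_k)\mid\mathcal{F}_{k-1}]$ and not just an unconditional mean. An actual MCMC chain is Markov and dependent, so $\params_k$ given $\mathcal{F}_{k-1}$ is not literally $Q$. My approach is to treat the null explicitly as the assumption that, conditionally on the filtration, the draws are i.i.d.\ from $Q$ (as written in $H_0$), so that the inequality holds almost surely for each $k$. If only the marginal statement is wanted, the same computation applies unconditionally with $\params_1$ fixed: $L(\params_1)$ is a constant, and linearity gives the bound directly. I will state this modeling assumption clearly rather than try to derive it from properties of the sampler.
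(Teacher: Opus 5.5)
Your argument is correct and is essentially the paper's proof: the paper treats $\params_1$ as fixed, so $L(\params_1)$ is a positive constant that can be pulled out of the expectation before applying $H_0$, which is exactly your unconditional fallback. Your conditional version given $\mathcal{F}_{k-1}$ rests on your explicitly stated assumption that $\params_k\sim Q$ independently of the past; it is a mild strengthening that directly yields $\mathbb{E}[E_k\mid\mathcal{F}_{k-1}]\le E_{k-1}$, which the paper instead obtains by invoking Proposition 7.20 of Ramdas et al.\ under its separate independence assumption.
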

\begin{proof}
    Since $\params_1$ is fixed, $L(\params_1)$ is a positive constant. For any distribution $Q$ satisfying $H_0$:
    \begin{equation*}
        \mathbb{E}_{H_0}[S_k] = \mathbb{E}_{\params_k \sim Q}\left[\frac{L(\params_k)}{L(\params_1)}\right] = \frac{\mathbb{E}_Q[L(\params_k)]}{L(\params_1)} \le 1,
    \end{equation*}
    where the inequality follows directly from the definition of~$H_0$ in~(1). Since $S_k \ge 0$ by construction, it is a valid e-variable. Specifically, we are testing the composite null that the sampling distribution $Q$ does not improve expected likelihoods relative to $\params_1$.
\end{proof}

In practice, we accumulate evidence over the sequence of samples and define the accumulated evidence process $E_k$ as

$$
    E_k = E_{k-1} \times S_k = \textstyle \prod_{j=2}^k \frac{L(\params_j)}{L(\params_1)}
$$
with $E_1 = 1$.
\begin{figure*}[t]
    \centering
    \begin{subfigure}[t]{0.48\textwidth}
        \centering
        \includegraphics[width=\linewidth]{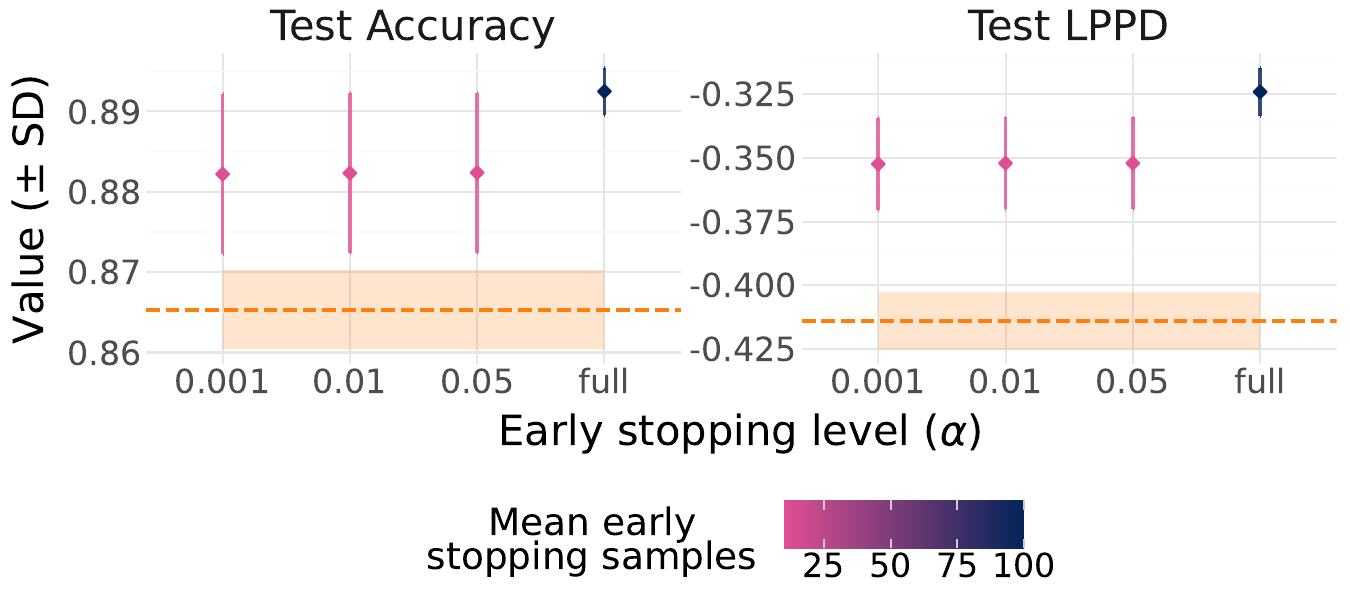}
        \caption{Chainwise performance}
    \end{subfigure}%
    ~ 
    \begin{subfigure}[t]{0.48\textwidth}
        \centering
        \includegraphics[width=\linewidth]{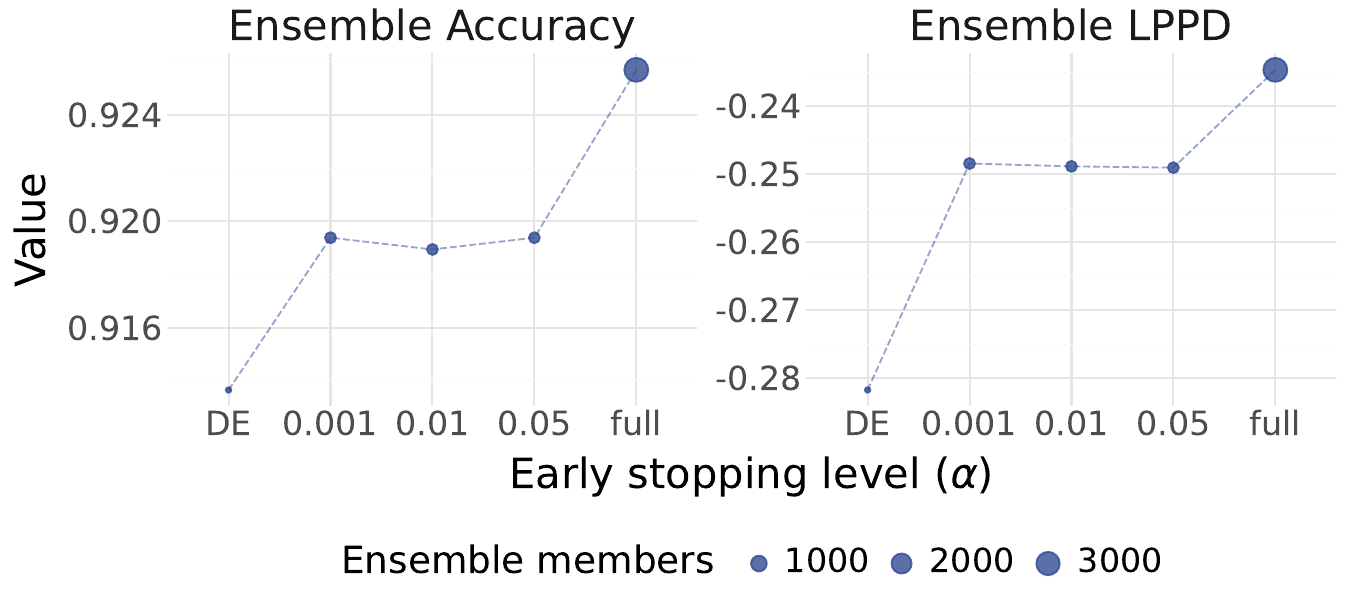}
        \caption{Ensemble performance}
    \end{subfigure}
    \caption{Chainwise and ensemble hold-out test performance of the E-value induced minimal BDEs in comparison to the full ensemble and the optimized warmstarts, i.e., the DE (members). The reference baseline is the first posterior sample \textbf{after} the sampler's warmup. The architecture is a ResNet-7 on the \texttt{CIFAR-10} dataset. Error bars represent standard deviation over the 32 chains/members and the optimized warmstart baseline performance is provided as dashed orange line (standard deviation as shaded area).}
    \label{fig:performance_resnet7_firstsample}
\end{figure*}

\begin{table}[b!]
\centering
\caption{Chainwise and ensemble hold-out test performance along with the compression and sizes of the E-value induced minimal BDEs in comparison to the full ensemble and the optimized warmstarts, i.e., the DE (members). The task is a 22M parameter ViT on the \texttt{Imagenette} dataset with 16 chains/members. Reference used for the E-value is given in brackets.}
\label{tab:compression_vit}
\vspace{2mm}
\addtolength{\tabcolsep}{-3pt} 
\small 
\begin{tabular}{l ccc}
\toprule
\textbf{Method} & \textbf{LPPD} ($\uparrow$) & \textbf{Samples} & \textbf{Compr.} \\
\midrule
\textit{Ensemble Performance} & & & \\
DE & -0.9053 & 16 & $\times100$ \\
$\alpha=0.01$ (DE ref.) & -0.7553 & 16 & $\times100$\\
$\alpha=0.01$ (Sample ref.) & -0.7950 & 158 & $\times10.1$\\
Full BDE & -0.7594 & 1600 & $\times1$ \\
\midrule
\textit{Single Chain (Mean)} & & & \\
DNN  & -1.6107 & 1 & $\times100$ \\
$\alpha=0.01$ (DE ref.) & -1.3912 & 1 & $\times100$\\
$\alpha=0.01$ (Sample ref.) & -1.3110 & 9.9 & $\times10.1$\\
Full Chain & -1.2964 & 100 & $\times1$ \\
\bottomrule
\end{tabular}
\addtolength{\tabcolsep}{3pt} 
\end{table}

\paragraph{On the Assumption of Effective Independence} The construction of $E_k$ as a test supermartingale formally relies on the independence of the likelihood ratios $S_k$. In practice, this can be ensured via sufficient thinning and MCMC diagnostics. A detailed discussion can be found in App.~\ref{app:independence}.

By Proposition 7.20 in \cite{Ramdas2024HypothesisTW}, under this independence assumption, the sequence $(E_k)_{k \ge 1}$ constitutes a test supermartingale. This enables anytime-valid stopping via Ville’s Inequality \citep{ville1939collectif}:
\begin{equation}
P_{H_0}(\exists k : E_k \ge 1/\alpha) \le \alpha
\end{equation}
for any significance level $\alpha\in(0,1).$ In other words, we stop sampling as soon as $E_k \ge 1/\alpha$. If the threshold is not met within the maximum budget, the chain is discarded for lack of statistical evidence.

\section{EMPIRICAL RESULTS}
\label{sec:results}

We investigate the practical utility of E-value based stopping for constructing ``minimal BDEs." We define a minimal BDE as the ensemble constructed by stopping the MCMC chains the moment the accumulated evidence $E_k$ rejects the null hypothesis $H_0$ at a significance level $\alpha$. If $H_0$ is rejected, we have statistical confidence that the sampling process is generating models that outperform the baseline; if not, the method signals that additional computational investment might not yield substantial returns.

We evaluate this across architectures and modalities: a Vision Transformer (ViT) on \texttt{Imagenette}, a ResNet-7 on \texttt{CIFAR-10}, and Multi-Layer Perceptrons (MLP) on the \texttt{bikesharing} (regression) and \texttt{income} (binary classification) datasets. We compare two reference baselines ($\params_1$) for the test: (1) the optimized warmstart (DE), testing if sampling improves upon the AdamW solution, and (2) the first posterior sample, testing if the chain improves upon its post-warmup state.

\subsection{Minimal BDEs in Vision Transformers}

We first analyze the ViT. \cref{fig:performance_vit_firstsample,fig:performance_vit_de} (Appendix) illustrate the predictive performance of the minimal BDE on \texttt{Imagenette} for various $\alpha$ levels and for the two considered baselines respectively. We observe that the minimal BDE guided by E-values in all cases achieves a LPPD competitive with the full ensemble, despite utilizing only a fraction of the computational budget.

As detailed in \cref{tab:compression_vit}, utilizing the \textit{first sample} reference allows for a rejection of $H_0$ with roughly 10 samples per chain, resulting in a $10$-fold compression compared to the fixed budget of 100 samples per chain. While the individual chain accuracy is slightly lower than that of the full chain, the ensemble accuracy of the minimal BDE is boosted significantly, effectively matching or slightly surpassing the full BDE. This indicates that the full chains are likely sampling excessively; the E-value criterion successfully identifies a point where the ensemble has captured sufficient diversity to meaningfully improve upon the baseline. In particular, this ensemble size can serve as a reasonable target size for post-hoc distillation and compression algorithms.

When using the stricter \textit{DE reference} (comparing against the optimized mode), the stopping criterion triggers even earlier (\cref{tab:compression_vit}), yielding ensembles that slightly outperform the full baseline in LPPD with minimal members. Considering the non-negligible warmup of the sampler, this earlier stopping is intuitively clear and found also in later experiments.

\subsection{Robustness and Generalization}

\paragraph{ResNet-7 (CIFAR-10)} Consistent with the ViT results, we observe high compression rates (with factors of $10-60$, see Table \ref{tab:compression_resnet} in Appendix). However, unlike the ViT, the minimal BDE does not fully match the predictive performance of the full chain (see \cref{fig:performance_resnet7_de,fig:performance_resnet7_firstsample}). This highlights the nuanced trade-off: our test detects significant improvement over the baseline, but does not guarantee optimality. The result suggests that while the first few high-quality posterior samples provide the majority of improvement over the mode, the ``long tail" of the chain still contributes to the posterior approximation.

\paragraph{Distributional Regression} On the tabular regression task, our method offers the largest efficiency gains compared to the default setting proposed, e.g., in \citet{sommer2025mile}. For the \textit{first sample reference}, the minimal BDE reduces the number of samples by a factor of $\approx 250$, stopping after $\approx4$ samples per chain, while still outperforming the DE baseline in both RMSE and LPPD (see \cref{tab:compression_mlpbike,fig:performance_mlpbike_de,fig:performance_mlpbike_firstsample}). This highlights the method's generalizability beyond classification.

\paragraph{Sanity Check} On the \texttt{income} classification task (Table \ref{tab:compression_mlpincome}, Appendix), we observe that with the \textit{DE reference}, the E-values never cross the rejection threshold $1/\alpha$ for all considered $\alpha$-levels. This result is noteworthy as it indicates that, within the considerable yet constrained computational budget, the sampler could not statistically improve upon the DE. Rather than forcing the collection of a minimal number of marginally useful samples, the method defaults to the DE (0 extra samples), preventing wasteful computation at inference.

\subsection{Dynamics of Evidence Accumulation}

A critical characteristic and a potential limitation of this formulation is the high volatility of the test statistic. Because the E-value $E_k$ is a product of likelihood ratios, small but consistent differences in model quality translate into exponential growth or decay. As illustrated in \cref{fig:evalue_dynamics}, this leads to a fanning out effect. The chains that explore higher-density regions reject the null hypothesis quite rapidly, while those that degrade performance vanish towards zero just as quickly.

\begin{figure}[h]
    \centering
    \includegraphics[width=0.8\linewidth]{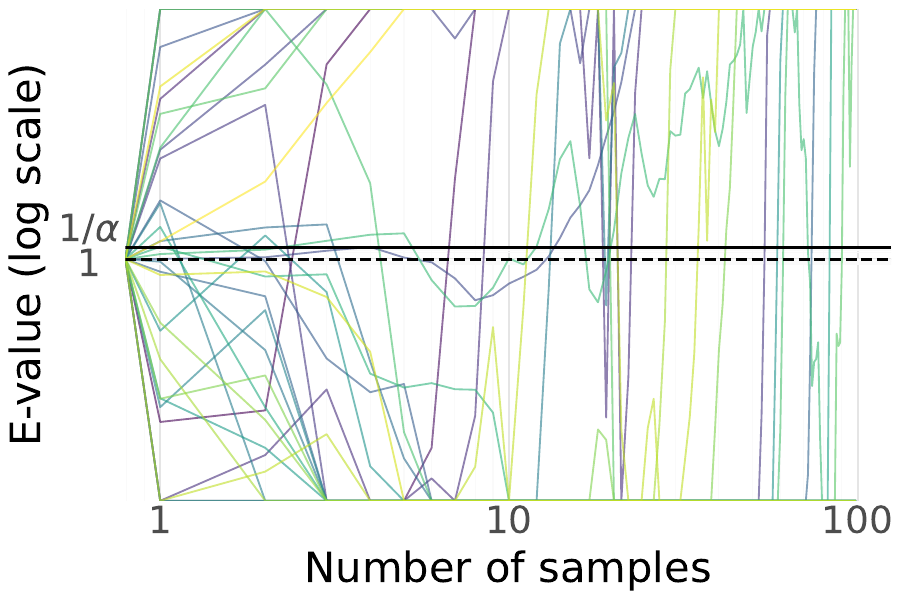}
    \caption{Evolution of E-values for ResNet-7 chains on \texttt{CIFAR-10} (Log-scale). Each colored line represents a single MCMC chain (32 in total). The rejection threshold for $\alpha=0.01$ is given as a black solid line.}
    \label{fig:evalue_dynamics}
\end{figure}

\section{CONCLUSION}

By formalizing MCMC sampling as a sequential hypothesis test using E-values, we show that it is possible to identify minimal ensembles that capture the majority of predictive performance with $10-60$ times less compute. This framework introduces a principled mechanism for autonomous abstention, ensuring resources at inference are only spent when the chain statistically improves upon the chosen baseline. 

As we are taking the first steps toward a decision-theoretic framework for Bayesian Deep Ensembles, our approach remains sensitive to the volatility of likelihood ratio-based E-values and validation set noise. Furthermore, while thinning approximates the independence required for our martingale guarantees, persistent autocorrelation remains a theoretical challenge. Moreover, the current formulation focuses on baseline improvement, incorporating a fixed target outperformance ratio, for example, calibrated to the DE variability, is a natural extension. The stopping rule is presently defined for a single chain, with multi-chain extensions via compositional E-value properties left for future work. We view this as a starting point for developing more sophisticated, dependence-aware E-processes for BDEs; more broadly, related paradigms such as subspace inference \citep{izmailov_2020_SubspaceInferencea, dold2024, dold25paths} may also benefit from these ideas.

\section*{Acknowledgments}

This research was supported by grants from NVIDIA and utilized NVIDIA products, mainly NVIDIA A100 GPUs for the posterior sampling.

\bibliography{refs}
\bibliographystyle{apalike}

\newpage
\appendix
\onecolumn

\section{CONNECTION OF AVERAGE SAMPLE PERFORMANCE TO POSTERIOR PREDICTIVE}\label{app:connectionjensen}

While our statistical test operates on the sequence of individual samples $\params_k$, it serves as a rigorous proxy for the quality of the Ensemble PPD. 

Empirically, the PPD is approximated as the average predictive distribution. By Jensen's Inequality, the log pointwise predictive density (LPPD), which is advocated by \citet{gelman2014a} as the metric of choice for evaluating the quality of the full predictive posterior, is lower-bounded by the average log-likelihood of its members:
\begin{equation}
    \text{LPPD} = \sum_{(\bm{x},y)} \log \left( \frac{1}{K} \sum_k p(y|\bm{x}, \params_k) \right) \ge \frac{1}{K} \sum_k \sum_{(\bm{x},y)} \log p(y|\bm{x}, \params_k)
\end{equation}
Our proposed stopping criterion tests the right-hand side of this inequality (the average member performance). Therefore, if our E-value rejects $H_0$, it guarantees that the member models have improved over the baseline significantly. Due to the inequality, this implies the PPD has improved by \textit{at least} that margin, plus any additional gains from ensemble diversity. This makes our criterion a valid, yet conservative test for PPD improvement.

\section{DETAILED DISCUSSION ON INDEPENDENCE AND MCMC AUTOCORRELATION} \label{app:independence}

The construction of $E_k$ as a test supermartingale formally relies on the independence of the likelihood ratios $S_k$. While raw MCMC chains exhibit autocorrelation, strict independence is not required for the utility of the test, provided the chain achieves effective independence via sufficient thinning. Recent analysis of the empirical posteriors of BDEs suggests that while global mixing is difficult, local mixing is achievable with state-of-the-art samplers. For example, \citet{sommer2024connecting,sommer2025mile} demonstrate that BDEs generated via samplers like pSMILE or MILE achieve rapid local stationarity, evidenced by very low $c\hat{R}$ values. Furthermore, the momentum decoherence rate in these MCLMC-based samplers serves as a proxy for the necessary thinning interval; by setting the thinning rate larger than the autocorrelation time, subsequent samples $\params_k$ and $\params_{k+1}$ become effectively statistically independent for the purpose of the test. We explicitly note that if significant autocorrelation persists (e.g., due to insufficient thinning or locally trapped chains), the product of positively correlated E-values may grow faster than theoretically warranted, potentially inflating the Type-I error rate. The guarantees in Section \ref{sec:methods} strictly hold for independent streams; in the presence of dependence, the stopping rule should be interpreted as approximate.

\section{EXPERIMENTAL SETUP}

As benchmark datasets, we use \texttt{bikesharing} \cite{misc_bike_sharing_dataset_275}, \texttt{income} \citep{kohavi96incomedata}, CIFAR-10 \citep{krizhevsky2009learning} and \texttt{Imagenette} \citep{imagenette_dataset} and deploy the codebase of \citet{sommer2025can} to generate the full BDE ensemble/set of samples. We provide all detailed experimental configurations in our codebase available at \href{}{https://github.com/EmanuelSommer/eBNN}.

\clearpage
\section{ADDITIONAL RESULTS}\label{app:additional_results}

All additional results for \cref{sec:results} can be found below. In particular the analysis for the different reference choice.

\subsection{Additional Results for the ViTs}

\begin{figure*}[h]
    \centering
    \begin{subfigure}[t]{0.48\textwidth}
        \centering
        \includegraphics[width=\linewidth]{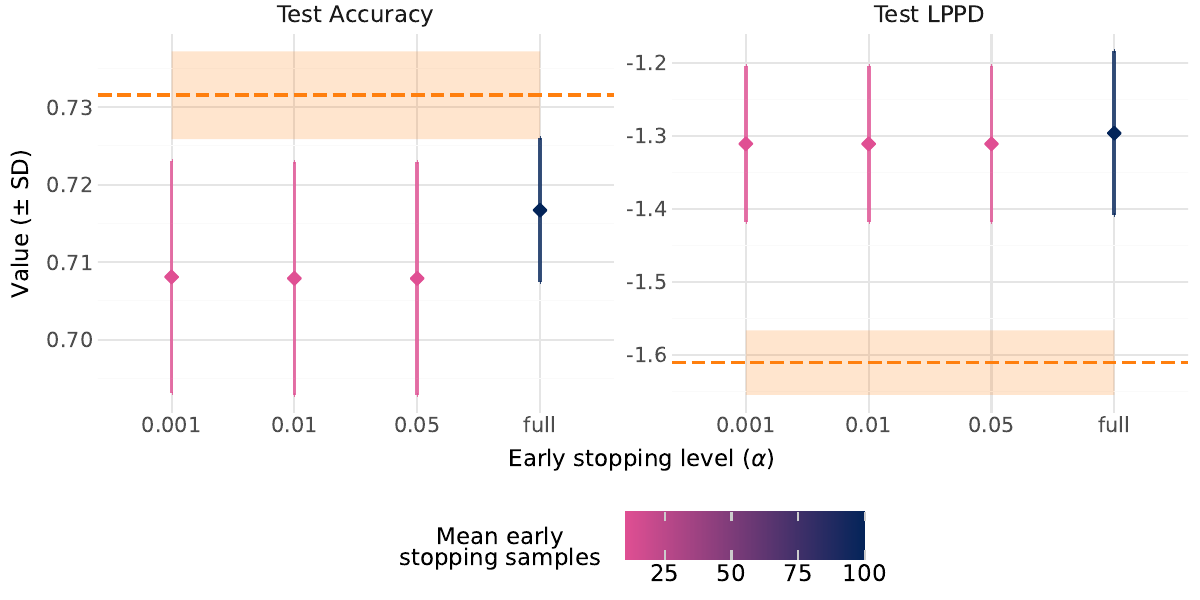}
        \caption{Chainwise performance (error bars represent standard\\deviation over 16 chains, optimized warmstart in orange)}
    \end{subfigure}%
    ~ 
    \begin{subfigure}[t]{0.48\textwidth}
        \centering
        \includegraphics[width=\linewidth]{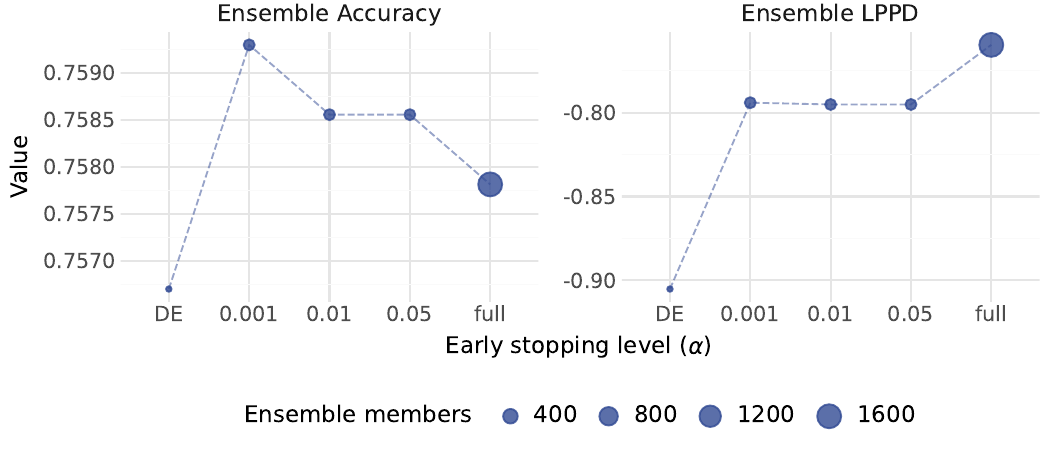}
        \caption{Ensemble performance (16 ensemble members)}
    \end{subfigure}
    \caption{Chainwise and ensemble hold-out test performance of the E-value induced minimal BDEs in comparison with the full ensemble and the optimized warmstarts i.e. the DE (members). The reference baseline is the first posterior sample \textbf{after} the sampler's warmup. The architecture is a 22M parameter ViT on the \texttt{Imagenette} dataset.}
    \label{fig:performance_vit_firstsample}
\end{figure*}

\begin{figure*}[h]
    \centering
    \begin{subfigure}[t]{0.48\textwidth}
        \centering
        \includegraphics[width=\linewidth]{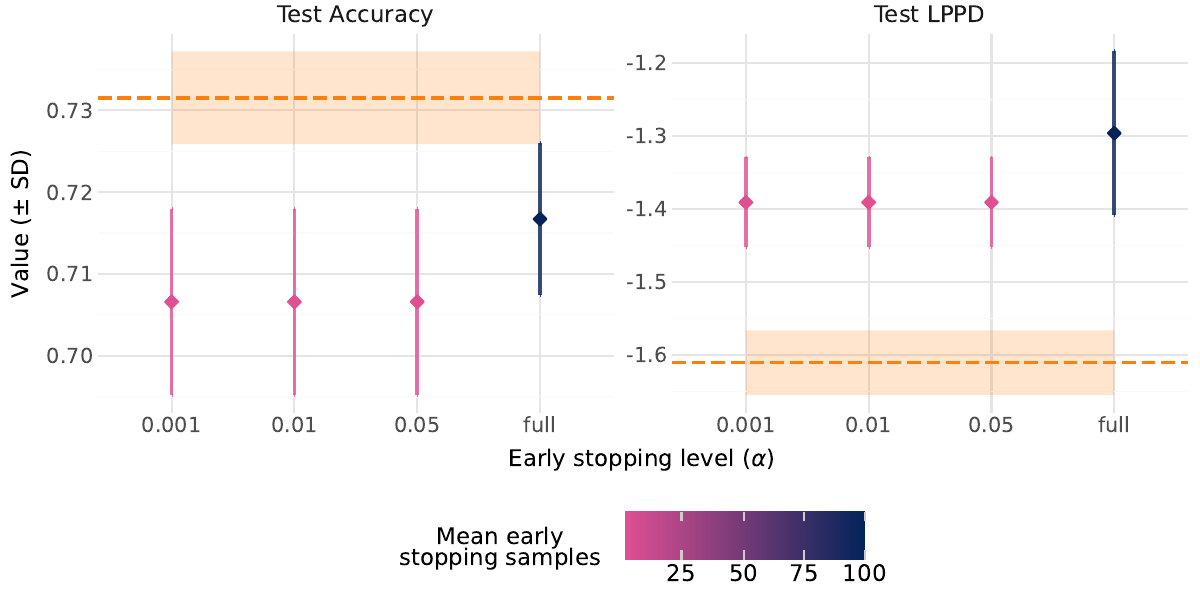}
        \caption{Chainwise performance (error bars represent standard\\deviation over 16 chains, optimized warmstart in orange)}
    \end{subfigure}%
    ~ 
    \begin{subfigure}[t]{0.48\textwidth}
        \centering
        \includegraphics[width=\linewidth]{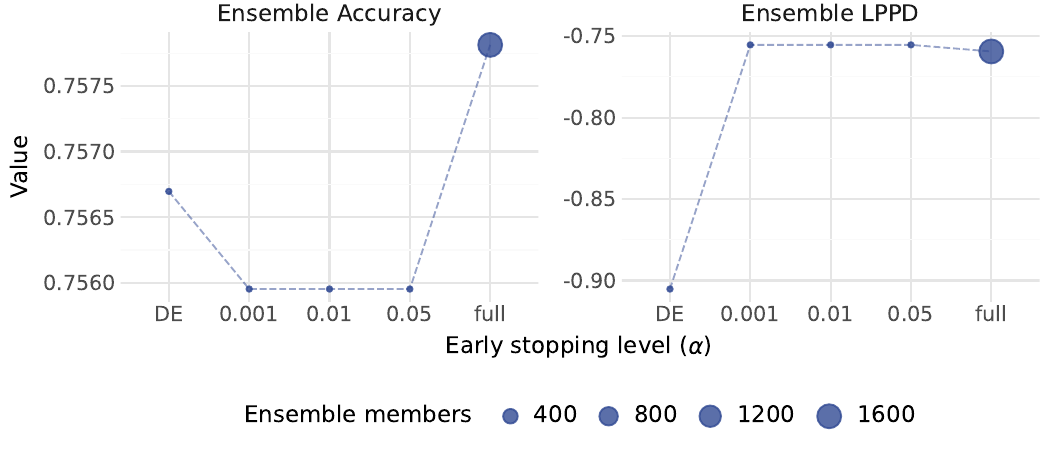}
        \caption{Ensemble performance (16 ensemble members)}
    \end{subfigure}
    \caption{Chainwise and ensemble hold-out test performance of the E-value induced minimal BDEs in comparison with the full ensemble and the optimized warmstarts i.e. the DE (members). The reference baseline is the optimized solutions \textbf{before} the sampler's warmup. The architecture is a 22M parameter ViT on the \texttt{Imagenette} dataset.}
    \label{fig:performance_vit_de}
\end{figure*}

\clearpage
\subsection{Additional Results for the ResNet-7}

\begin{figure*}[h]
    \centering
    \begin{subfigure}[t]{0.48\textwidth}
        \centering
        \includegraphics[width=\linewidth]{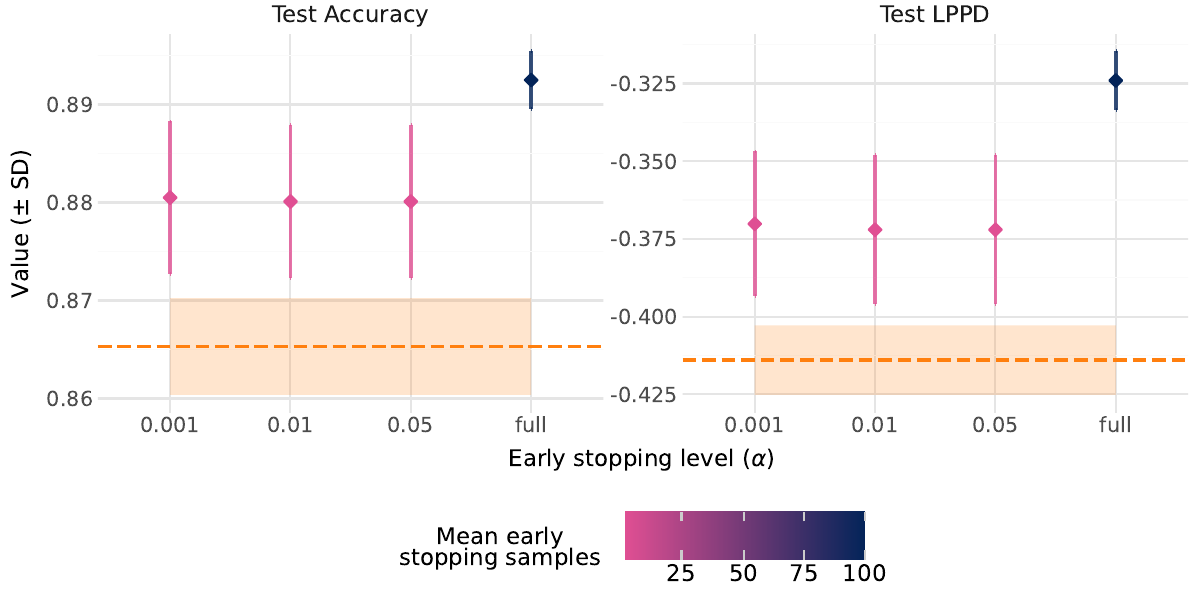}
        \caption{Chainwise performance (error bars represent standard\\deviation over 32 chains, optimized warmstart in orange)}
    \end{subfigure}%
    ~ 
    \begin{subfigure}[t]{0.48\textwidth}
        \centering
        \includegraphics[width=\linewidth]{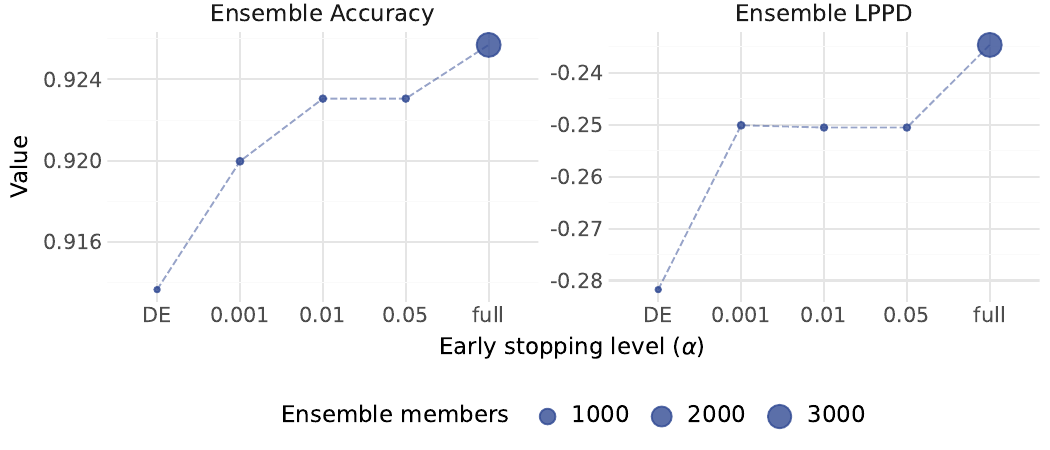}
        \caption{Ensemble performance (32 ensemble members)}
    \end{subfigure}
    \caption{Chainwise and ensemble hold-out test performance of the E-value induced minimal BDEs in comparison with the full ensemble and the optimized warmstarts i.e. the DE (members). The reference baseline is the optimized solutions \textbf{before} the sampler's warmup. The architecture is a ResNet-7 on the \texttt{CIFAR-10} dataset.}
    \label{fig:performance_resnet7_de}
\end{figure*}

\begin{table*}[h!]
\centering
\caption{Chainwise and ensemble hold-out test performance along with the compression and sizes of the E-value induced minimal BDEs in comparison with the full ensemble and the optimized warmstarts i.e. the DE (members). The architecture is a ResNet-7 on the \texttt{CIFAR-10} dataset and 32 chains/members are used.}
\label{tab:compression_resnet}
\resizebox{0.85\textwidth}{!}{
\begin{tabular}{c|l|cccc}
\toprule
 & \textbf{Method} & \textbf{(ensemble/mean) LPPD} & \textbf{(average) Samples} & \textbf{Compression vs. Full} \\
\midrule
\multirow{6}{*}{\parbox{3cm}{\centering\textbf{Ensemble}}} 
& DE & -0.2817 & 32 & $\times 100$\\
& $\alpha=0.01$ (DE ref.) & -0.2506 & 43 & $\times74.4$\\
& $\alpha=0.001$ (DE ref.) & -0.2501 & 50 & $\times64$\\
& $\alpha=0.01$ (Sample ref.) & -0.2489 & 312 & $\times10.3$\\
& $\alpha=0.001$ (Sample ref.) & -0.2484 & 322 & $\times9.9$\\
& Full & -0.2347 & 3200 & $\times 1$ \\
\midrule
\multirow{6}{*}{\parbox{3cm}{\centering\textbf{Single chain}}} 
& DNN & -0.4140 & 1 & $\times 100$\\
& $\alpha=0.01$ (DE ref.) & -0.3721 & 1.3 & $\times76.9$\\
& $\alpha=0.001$ (DE ref.) & -0.3702 & 1.6 & $\times62.5$\\
& $\alpha=0.01$ (Sample ref.) & -0.3521 & 9.8 & $\times10.2$\\
& $\alpha=0.001$ (Sample ref.) & -0.3524 & 10.1 & $\times9.9$\\
& Full & -0.3241 & 100 & $\times 1$ \\
\bottomrule
\end{tabular}}
\end{table*}

\subsection{Additional Results for the regression MLP}

\begin{figure*}[h]
    \centering
    \begin{subfigure}[t]{0.48\textwidth}
        \centering
        \includegraphics[width=\linewidth]{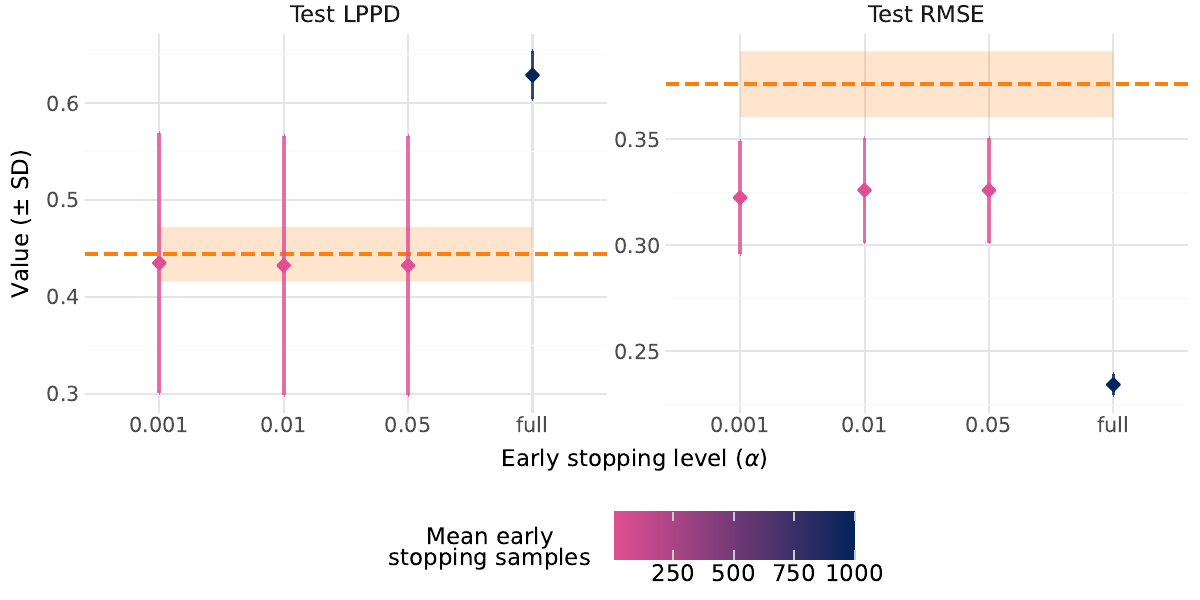}
        \caption{Chainwise performance (error bars represent standard\\deviation over 16 chains, optimized warmstart in orange)}
    \end{subfigure}%
    ~ 
    \begin{subfigure}[t]{0.48\textwidth}
        \centering
        \includegraphics[width=\linewidth]{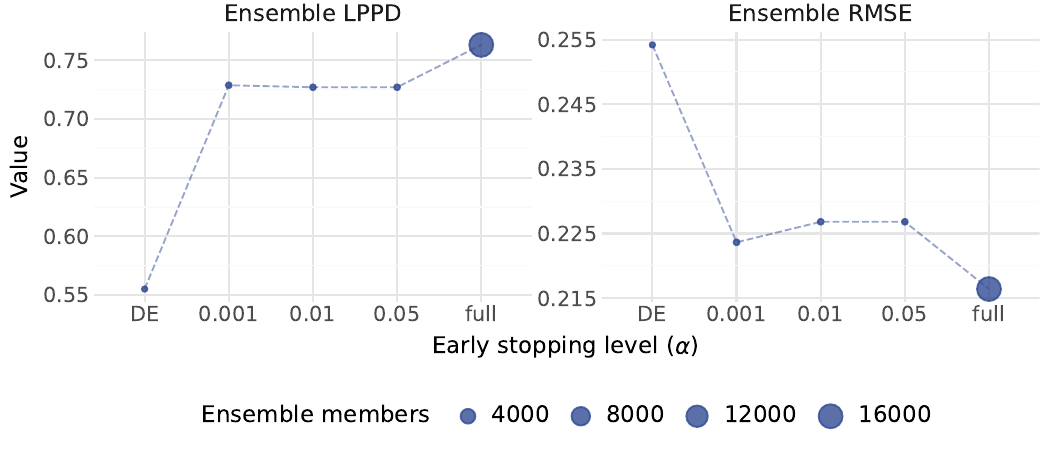}
        \caption{Ensemble performance (16 ensemble members)}
    \end{subfigure}
    \caption{Chainwise and ensemble hold-out test performance of the E-value induced minimal BDEs in comparison with the full ensemble and the optimized warmstarts i.e. the DE (members). The reference baseline is the optimized solutions \textbf{before} the sampler's warmup. The task is distributional regression via a 16x4 MLP on the \texttt{bikesharing} dataset.}
    \label{fig:performance_mlpbike_de}
\end{figure*}

\begin{figure*}[h]
    \centering
    \begin{subfigure}[t]{0.48\textwidth}
        \centering
        \includegraphics[width=\linewidth]{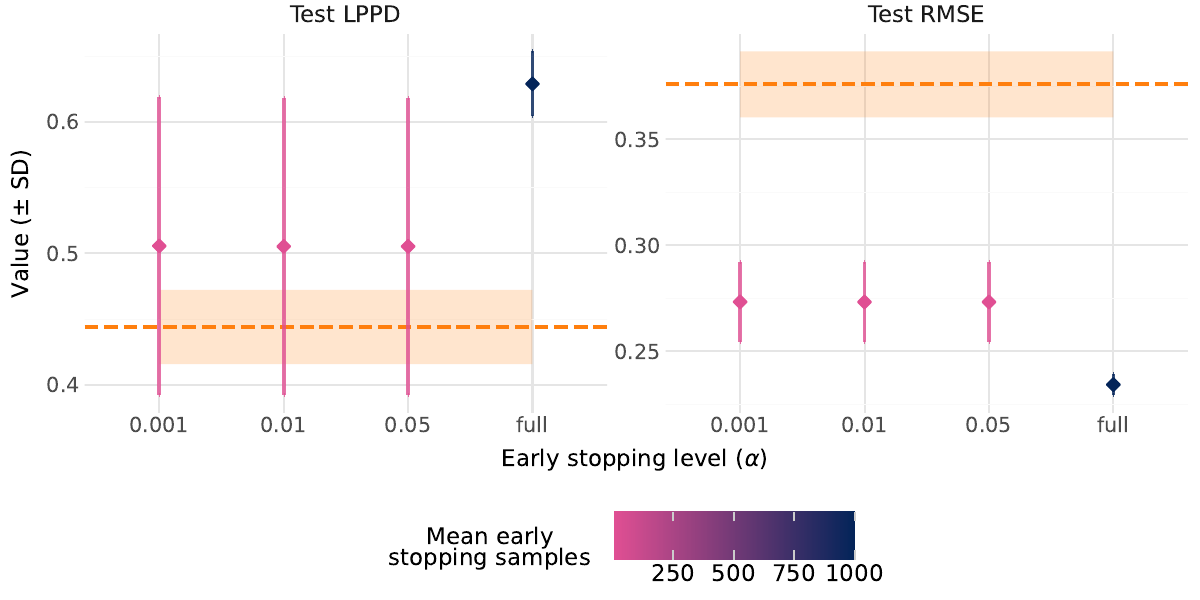}
        \caption{Chainwise performance (error bars represent standard\\deviation over 16 chains, optimized warmstart in orange)}
    \end{subfigure}%
    ~ 
    \begin{subfigure}[t]{0.48\textwidth}
        \centering
        \includegraphics[width=\linewidth]{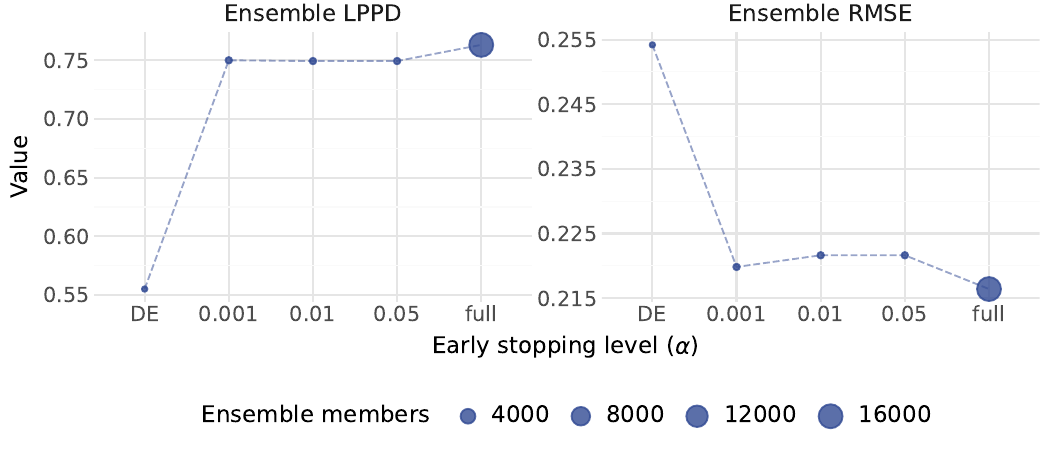}
        \caption{Ensemble performance (16 ensemble members)}
    \end{subfigure}
    \caption{Chainwise and ensemble hold-out test performance of the E-value induced minimal BDEs in comparison with the full ensemble and the optimized warmstarts i.e. the DE (members). The reference baseline is the first posterior sample \textbf{after} the sampler's warmup. The task is distributional regression via a 16x4 MLP on the \texttt{bikesharing} dataset.}
    \label{fig:performance_mlpbike_firstsample}
\end{figure*}

\begin{table*}[h!]
\centering
\caption{Chainwise and ensemble hold-out test performance along with the compression and sizes of the E-value induced minimal BDEs in comparison with the full ensemble and the optimized warmstarts i.e. the DE (members). The task is distributional regression via a 16x4 MLP on the \texttt{bikesharing} dataset and 16 chains/members are used.}
\label{tab:compression_mlpbike}
\resizebox{0.85\textwidth}{!}{
\begin{tabular}{c|l|cccc}
\toprule
 & \textbf{Method} & \textbf{(ensemble/mean) LPPD} & \textbf{(average) Samples} & \textbf{Compression vs. Full} \\
\midrule
\multirow{6}{*}{\parbox{3cm}{\centering\textbf{Ensemble}}} 
& DE & 0.5551 & 16 & $\times 1000$\\
& $\alpha=0.01$ (DE ref.) & 0.7270 & 25 & $\times640$\\
& $\alpha=0.001$ (DE ref.) & 0.7287 & 26 & $\times615.4$\\
& $\alpha=0.01$ (Sample ref.) & 0.7493 & 64 & $\times250$\\
& $\alpha=0.001$ (Sample ref.) & 0.7499 & 66 & $\times242.4$\\
& Full & 0.7631 & 16000 & $\times 1$ \\
\midrule
\multirow{6}{*}{\parbox{3cm}{\centering\textbf{Single chain}}} 
& DNN & 0.4440 & 1 & $\times 1000$\\
& $\alpha=0.01$ (DE ref.) & 0.4324 & 1.6 & $\times625$\\
& $\alpha=0.001$ (DE ref.) & 0.4348 & 1.6 & $\times625$\\
& $\alpha=0.01$ (Sample ref.) & 0.5053 & 4 & $\times250$\\
& $\alpha=0.001$ (Sample ref.) & 0.5057 & 4.1 & $\times243.9$\\
& Full & 0.6289 & 1000 & $\times 1$ \\
\bottomrule
\end{tabular}}
\end{table*}

\clearpage
\subsection{Additional Results for the Classification MLP}

\begin{figure*}[h]
    \centering
    \begin{subfigure}[t]{0.48\textwidth}
        \centering
        \includegraphics[width=\linewidth]{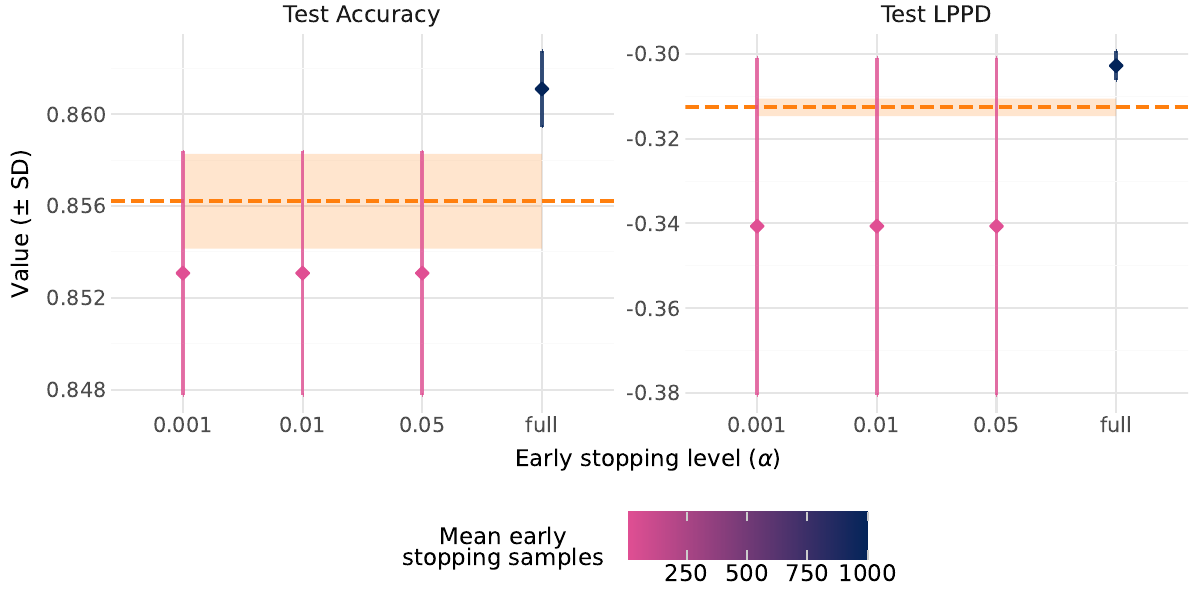}
        \caption{Chainwise performance (error bars represent standard\\deviation over 16 chains, optimized warmstart in orange)}
    \end{subfigure}%
    ~ 
    \begin{subfigure}[t]{0.48\textwidth}
        \centering
        \includegraphics[width=\linewidth]{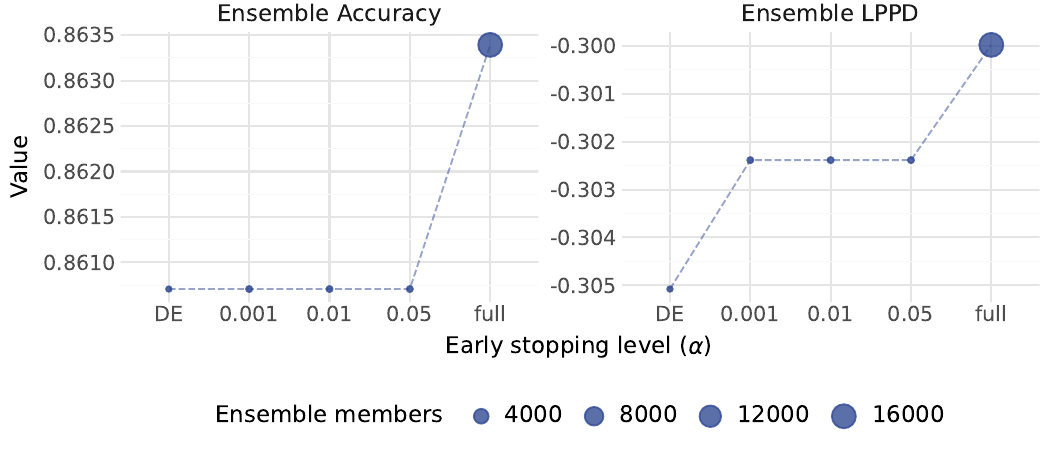}
        \caption{Ensemble performance (16 ensemble members)}
    \end{subfigure}
    \caption{Chainwise and ensemble hold-out test performance of the E-value induced minimal BDEs in comparison with the full ensemble and the optimized warmstarts i.e. the DE (members). The reference baseline is the first posterior sample \textbf{after} the sampler's warmup. The task is binary classification via a 16x3 MLP on the \texttt{income} dataset.}
    \label{fig:performance_mlpincome_firstsample}
\end{figure*}

\begin{table*}[h!]
\centering
\caption{Chainwise and ensemble hold-out test performance along with the compression and sizes of the E-value induced minimal BDEs in comparison with the full ensemble and the optimized warmstarts i.e. the DE (members). The task is binary classification via a 16x3 MLP on the \texttt{income} dataset and 16 chains/members are used.}
\label{tab:compression_mlpincome}
\resizebox{0.85\textwidth}{!}{
\begin{tabular}{c|l|cccc}
\toprule
 & \textbf{Method} & \textbf{(ensemble/mean) LPPD} & \textbf{(average) Samples} & \textbf{Compression vs. Full} \\
\midrule
\multirow{6}{*}{\parbox{3cm}{\centering\textbf{Ensemble}}} 
& DE & -0.3051 & 16 & $\times 1000$\\
& $\alpha=0.01$ (DE ref.) & - & 0 & - \\
& $\alpha=0.001$ (DE ref.) & - & 0 & - \\
& $\alpha=0.01$ (Sample ref.) & -0.3024 & 33 & $\times484.8$\\
& $\alpha=0.001$ (Sample ref.) & -0.3024 & 33 & $\times484.8$\\
& Full & -0.3000 & 16000 & $\times 1$ \\
\midrule
\multirow{6}{*}{\parbox{3cm}{\centering\textbf{Single chain}}} 
& DNN & -0.3126 & 1 & $\times 1000$\\
& $\alpha=0.01$ (DE ref.) & - & 0 & - \\
& $\alpha=0.001$ (DE ref.) & - & 0 & - \\
& $\alpha=0.01$ (Sample ref.) & -0.3406 & 2.1 & $\times476.2$\\
& $\alpha=0.001$ (Sample ref.) & -0.3406 & 2.1 & $\times476.2$\\
& Full & -0.3028 & 1000 & $\times 1$ \\
\bottomrule
\end{tabular}}
\end{table*}

\end{document}